\newtheorem{assumption}{Assumption}
\newtheorem{lemma}{Lemma}
\newtheorem{definition}{Definition}
\newtheorem{theorem}{Theorem}
\title{Enhancing Control Policy Smoothness by\\ Aligning Actions with Predictions from Preceding States}
\author{
    Kyoleen Kwak and Hyoseok Hwang\thanks{Corresponding author (hyoseok@khu.ac.kr)}
}
\begin{document}

\maketitle

\begin{abstract}
Deep reinforcement learning has proven to be a powerful approach to solving control tasks, but its characteristic high‑frequency oscillations make it difficult to apply in real‑world environments.
While prior methods have addressed action oscillations via architectural or loss-based methods, the latter typically depend on heuristic or synthetic definitions of state similarity to promote action consistency, which often fail to accurately reflect the underlying system dynamics.
In this paper, we propose a novel loss-based method by introducing a transition-induced similar state.
The transition-induced similar state is defined as the distribution of next states transitioned from the previous state.
Since it utilizes only environmental feedback and actually collected data, it better captures system dynamics.
Building upon this foundation, we introduce Action Smoothing by Aligning Actions with Predictions from Preceding States (ASAP), an action smoothing method that effectively mitigates action oscillations. 
ASAP enforces action smoothness by aligning the actions with those taken in transition-induced similar states and by penalizing second-order differences to suppress high-frequency oscillations.
Experiments in Gymnasium and Isaac-Lab environments demonstrate that ASAP yields smoother control and improved policy performance over existing methods.
\end{abstract}

\begin{links}
    \link{Code}{https://github.com/AIRLABkhu/ASAP}
\end{links}

\section{Introduction}
Reinforcement learning (RL) has seen tremendous advances in recent years \cite{9904958}. 
Beginning with discrete methods, RL has evolved into Deep Reinforcement Learning (DRL) through the integration of deep neural networks and has been extended to continuous action spaces, becoming a leading solution for control problems \cite{williams1992simple, lillicrap2015continuous}. 
Building on these advances, RL is widely applied to robot and machine control with impressive results \cite{levine2016end}, and these efforts aim to deploy learned policies in real-world settings \cite{8202133}.

Despite these successes, deploying RL in real-world settings remains challenging, with high-frequency action oscillations posing a significant obstacle.
When combined with hardware constraints in the real-world, such action oscillations can dramatically reduce component lifespan and cause excessive wear \cite{DELAPRESILLA2023108805}. 
In addition, action oscillations produce non-smooth actions, and these non-smooth actions can directly cause poor user experience, or in severe cases, safety problems.
This issue arises because the actor is overly sensitive to minor state perturbations, which yields large action deviations \cite{chen2021addressing}.

Research on mitigating action oscillations in RL has primarily focused on achieving smoother policy outputs by regulating output variation with respect to input fluctuations. Existing approaches fall into two main categories: architectural methods and loss penalty methods.
Architectural methods aim to prevent abrupt output changes by modifying the network structure, typically applying Spectral Normalization~\cite{takase2022stability} or Multi-dimensional Gradient Normalization~\cite{song2023lipsnet} to the actor network. 
However, these methods introduce additional computational cost during inference and often suffer from significant performance variability across environments~\cite{christmann2024benchmarking}.

On the other hand, loss penalty methods incorporate oscillation regularization terms directly into the policy loss. 
This approach does not alter the actor structure and thus avoids extra inference-time computation. A key idea behind this method is to enforce action consistency across similar states, making the definition of similar states crucial. 
Initial approaches employed fixed distributions~\cite{mysore2021regularizing}, while more recent methods introduced adaptive boundaries~\cite{kobayashi2022l2c2}. 
However, these approaches rely on heuristic neighborhood definitions and fill the region with synthetic rather than observed states.
As a result, the similar states do not accurately reflect the actual state distribution.
This mismatch undermines the theoretical guarantees and leads to significant declines in performance.

To overcome these limitations, in this paper, we propose a novel loss penalty based RL action smoothing method called regulate action oscillation via \textbf{A}ction \textbf{S}moothing by \textbf{A}ligning Actions with \textbf{P}redictions from Preceding States (ASAP).
ASAP introduces a transition-induced definition of similar states, under the assumption that states transitioning from the same previous state are similar.
We theoretically prove that this definition induces a bounded neighborhood, thereby providing a principled foundation for action smoothing based on similar states. 
Building on this foundation, ASAP introduces a spatial term that constrains action changes.
This spatial term constrains action changes across similar states by reducing the discrepancy between a predicted next action from the preceding state and the actual action taken.
In addition, ASAP adopts a temporal term that penalizes second-order differences to suppress high-frequency oscillations \cite{lee2024gradient}. 
Unlike prior methods that rely on synthetic or heuristically defined neighborhoods, our approach is trained exclusively on empirically collected transitions, ensuring fidelity to the real state distribution and alignment with system dynamics. 

We evaluated ASAP's policy performance and oscillation in Gymnasium \cite{towers2024gymnasium}, and Isaac-Lab environments \cite{mittal2023orbit}. 
Through experiments with various RL algorithms, we confirmed that ASAP effectively reduces action oscillations across multiple tasks while preserving the performance compared to other methods. 

This work's main contributions are as follows:
\begin{itemize}
\item We introduce \textbf{ASAP}, a new action smoothing method that combines the action constraint using transition-induced similar state with a penalty on second-order action variations.
\item We define similar states derived from the maximum-change bound of the transition dynamics, and theoretically prove that they form a bounded neighborhood for action smoothing.
\item Extensive experiments demonstrate that ASAP effectively suppresses action oscillations while preserving policy performance over existing methods.
\end{itemize}

\section{Related Work}
Many studies have focused on reducing the Lipschitz constant of the actor network to constrain its output changes to mitigate the action oscillations. 
Related studies are broadly categorized into two approaches:
\subsubsection{Architectural Methods.}
Architectural methods alter the network itself to satisfy Lipschitz constraints. 
Gogianu et al. showed that Spectral Normalization can stabilize reinforcement learning by controlling Lipschitz smoothness \cite{gogianu2021spectral}.
LipsNet \cite{song2023lipsnet} enforces global and local Lipschitz continuity via Multi-dimensional Gradient Normalization by regularizing the Jacobian's operator norm.
LipsNet++ \cite{song2025lipsnet} adds a Fourier filter and a Lipschitz controller with Jacobian regularization to decouple noise and smoothness.

\subsubsection{Loss Penalty Methods.}
Loss penalty methods add Lipschitz regularizers directly to the RL loss. 
Conditioning for Action Policy Smoothness (CAPS) \cite{mysore2021regularizing} adds temporal smoothing between adjacent states and spatial smoothing over a fixed range neighborhood of similar states to limit actor output variation.
Local Lipschitz Continuous Constraint (L2C2) \cite{kobayashi2022l2c2} dynamically adjusts the range of similar states based on the current state, thereby implementing the concept of local Lipschitz continuity in reinforcement learning.
Gradient-based CAPS (Grad-CAPS) \cite{lee2024gradient} proposed a new approach by applying Lipschitz constraints on derivatives instead of conventional Lipschitz constraints from a temporal perspective.
All of these methods share the network architecture of the base model and only increase its loss, so they incur additional computation during training while performing operations identical to the base model in inference.

\begin{figure*}[t]
  \centering
  \begin{subfigure}[t]{0.75\textwidth}
    \includegraphics[width=\columnwidth]{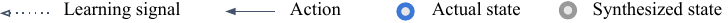}
    \label{fig:sim_desc}
  \end{subfigure}\\[-2.0ex]
  \begin{subfigure}[t]{0.33\textwidth}
    \includegraphics[width=\columnwidth]{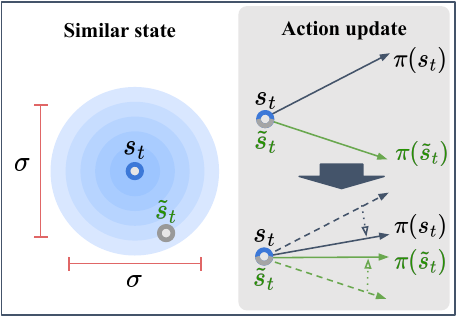}
    \caption{CAPS}
    \label{fig:sim_caps}
  \end{subfigure}
  \begin{subfigure}[t]{0.33\textwidth}
    \includegraphics[width=\columnwidth]{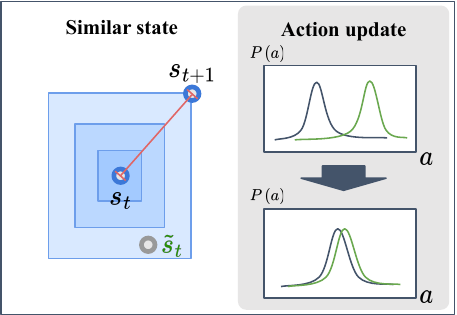}
    \caption{L2C2}
    \label{fig:sim_l2c2}
  \end{subfigure}
  \begin{subfigure}[t]{0.33\textwidth}
    \includegraphics[width=\columnwidth]{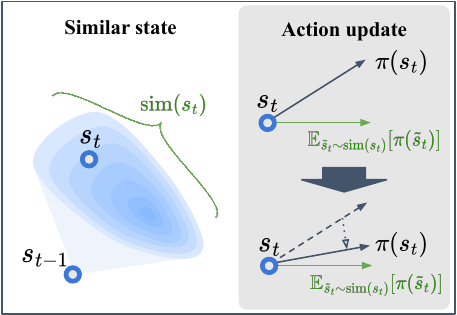}
    \caption{ASAP}
    \label{fig:sim_asap}
  \end{subfigure}
\caption{
Definition of similar states and action update mechanisms for (a) CAPS, (b) L2C2, and (c) ASAP.
(a) CAPS samples states from a Gaussian around the current state and enforces action consistency between real and sampled states.
(b) L2C2 constructs similar states by projecting the difference between the next state and the current state in multiple directions and minimizes divergence between the similar and actual policy distributions.
(c) ASAP uses the environment's transition distribution to define similar states and aligns the action with the expected policy output under this distribution.
}
  \label{fig:simstate}
\end{figure*}

\section{Background}
\subsection{Lipschitz Continuity}
\begin{definition}[Global Lipschitz Continuity]
A function \(f: \mathcal{X} \to \mathcal{Y}\) is called Lipschitz continuous with global Lipschitz constant \(K_{\mathrm{glob}}\in\mathbb{R}_+\) if, for all \(x^{(1)}, x^{(2)} \in \mathcal{X}\),
\begin{equation}\label{eq:global_lips}
d_Y(f(x^{(1)}),\; f(x^{(2)})) \;\le\; K_{\mathrm{glob}}\,d_X(x^{(1)},\; x^{(2)}),
\end{equation}
\end{definition}

\begin{definition}[Local Lipschitz Continuity]
A function \(f: \mathcal{X} \to \mathcal{Y}\) is locally Lipschitz continuous if, for every point \(x_i\in\mathcal{X}\), there exist a neighborhood \(U_i\subseteq\mathcal{X}\) and a local Lipschitz constant \(K_{\mathrm{loc}}(U_i)\in\mathbb{R}_+\) such that
\begin{equation}\label{eq:local_lips}
\begin{aligned}
d_Y(f(x_i^{(1)}),\; f(x_i^{(2)}))\;\le\;K_{\mathrm{loc}}(U_i)\,d_X(x_i^{(1)},\; x_i^{(2)}),
\\
\forall\,x_i^{(1)},\; x_i^{(2)}\in U_i,
\end{aligned}
\end{equation}
\end{definition}

\noindent here, \(d_X\) and \(d_Y\) are the distance metrics on the spaces \(\mathcal{X}\) and \(\mathcal{Y}\), respectively.

Lipschitz constraint means constraining the Eq.~\ref{eq:global_lips} or Eq.~\ref{eq:local_lips} to be satisfied. 
A lower Lipschitz constant \(K\) (whether \(K_{\mathrm{glob}}\) or \(K_{\mathrm{loc}}\)) implies smaller output variations for given input changes, enhancing the robustness to noise and resulting in smoother outputs. 
Since a single global constant cannot optimally fit all regions of the input space, a global Lipschitz constraint can prevent reaching the optimal solution. 

\subsection{Lipschitz Constraints in Reinforcement Learning}
This section describes how Lipschitz constraints have been applied in reinforcement learning to achieve action smoothness. 
After CAPS decomposed Lipschitz constraints for policy into temporal and spatial constraints, most subsequent works have extended their methods based on these two axes.
\subsubsection{Temporal Smoothness.}
For every timestep \(t\) in an episode, for each consecutive input sequence, the network's outputs satisfy the Lipschitz constraint such as
\begin{equation}
    d_A(f(s_{t}) ,\; f(s_{t+1}))\;\le\;K\,d_T(s_{t} ,\; s_{t+1}) \;=\;K,
\end{equation}
\noindent where \(s_t\in\mathcal{S}\) is the state, \(a_t=f(s_t)\in\mathcal{A}\) is the action, and \(f\colon\mathcal{S}\to\mathcal{A}\) is the actor. 
Here, \(d_A\) measures distance in the action space \(\mathcal{A}\), and \(d_T\) is the temporal distance metric.
In reinforcement learning, since the intervals between timesteps are typically assumed to be uniform, the temporal distance metric \(d_T\) is fixed to 1.

\subsubsection{Spatial Smoothness.}
For each state \(s_t\), the network satisfies a Lipschitz constraint:
\begin{equation}
    d_A(f(s_t),\; f(\tilde{s}_t)) \leq K\, d_S(s_t,\; \tilde{s}_t),
\end{equation}
where \(d_A\) and \(d_S\) denote distances in the action and state spaces, and \(\tilde{s}_t\) is a similar state. 
The definition of \(\tilde{s}_t\) varies by method (Figure~\ref{fig:simstate}): CAPS samples from \(\mathcal{N}(s_t, \sigma)\), enforcing global Lipschitz constraint, while L2C2 sets \(\tilde{s}_t = s_t + (s_{t+1} - s_t)\cdot u\) to reflect local Lipschitz constraint. 
However, because neither definition matches the true state distribution, additional synthetic states must be generated to populate the similar state region.

\section{Method}
\subsection{Overview}
We define transition-induced similar states via the transition function and demonstrate that they form a bounded neighborhood, providing a basis for spatial regularization. 
This yields a Lipschitz constraint on the composite function \(f \circ T\), from which a spatial loss for the actor is derived. 
Based on this foundation, we propose ASAP, which (i) enforces action consistency through a spatial loss and (ii) penalizes second-order action differences via a temporal loss. 
We also describe the training procedure of our method.

\begin{figure}[t]
  \centering
  \begin{subfigure}[t]{0.45\textwidth}
    \includegraphics[width=\columnwidth]{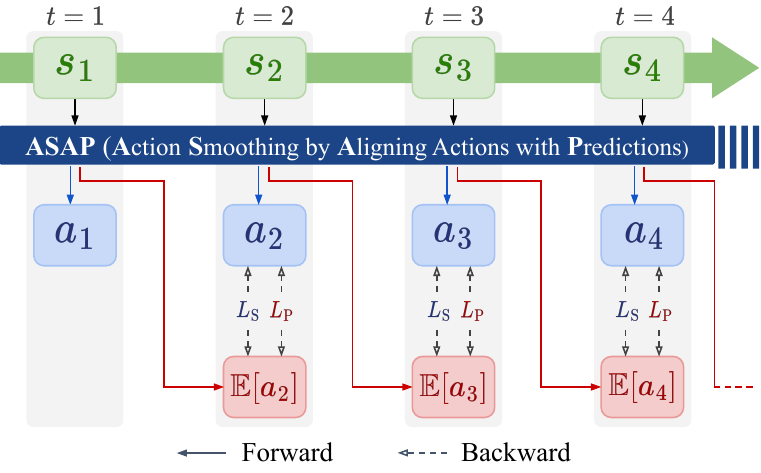}
    \caption{}
    \label{fig:update_procedure}
  \end{subfigure}
  \hspace{0.05\textwidth} 
  \begin{subfigure}[t]{0.45\textwidth}
    \includegraphics[width=\columnwidth]{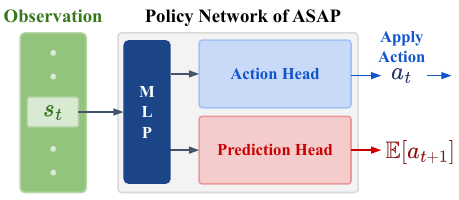}
    \caption{}
    \label{fig:arch}
  \end{subfigure}
  \caption{
  (a) The update procedure of ASAP and (b) The implementation architecture of ASAP.
}
  \label{fig:arch_all}
\end{figure}

\subsection{Similar State Based on Transition Distribution}
We define the distribution of the next states transitioning from the same previous state \(s_{t}\) as the distribution of similar states, as shown in Figure~\ref{fig:sim_asap}.

\begin{definition}[Similar State Distribution]\label{df:sim_state}
Given a state \(s_{t}\), the distribution of similar states is defined as
\begin{equation}\label{eq:sim_state}
\mathrm{sim}(s_{t}) \;=\; P\bigl(\,\cdot\mid s_{t-1}\bigr),
\end{equation}
where \(P(\,\cdot\mid s_{t-1})\) denotes the transition distribution from \(s_{t-1}\).
\end{definition}

To show that the above definition yields a locally bounded neighborhood suitable for imposing a local Lipschitz constraint, we introduce the following assumptions.

\begin{assumption}[Local Lipschitz Continuity of the Transition Function with respect to Noise]\label{ass:transition_lips}
For the state transition function
\begin{equation}\label{eq:noisy_transition}
s_{t} = T(s_{t-1}, a_{t-1}, \xi_{t-1}),
\end{equation}
we assume that, for each state-action pair \((s_{t-1},a_{t-1})\), the transition function is Lipschitz continuous with respect to the noise \(\xi_t\in\Xi\) acting on that pair, i.e.,
\begin{equation}\label{eq:transition_lips}
\begin{aligned}
d_S(T(s_{t-1},\;a_{t-1},\;\xi_{t-1}^{(1)}) ,\; T(s_{t-1},\;a_{t-1},\;\xi_{t-1}^{(2)})) \\ 
\le K_\xi(s_{t-1},\;a_{t-1})\,d_{\Xi}(\xi_{t-1}^{(1)} ,\; \xi_{t-1}^{(2)}),
\end{aligned}
\end{equation}
where \(K_\xi:\mathcal S\times\mathcal A\to\mathbb R_+\) is the local noise‑sensitivity constant at \((s_{t-1},\;a_{t-1})\), and \(d_{\Xi}\) measures distance in the noise space \(\Xi\).
\end{assumption}
The real-world dynamics corresponding to the transition function \(T(\cdot)\) in Assumption~\ref{ass:transition_lips} are typically \(C^1\) under most conditions, and this \(C^1\) property is commonly assumed in many system and controller designs~\cite{spong2006robot, featherstone2008rigid, camacho2013mpc}.
It is also well known that any \(C^1\) function is locally Lipschitz continuous~\cite{sontag1998mct}.

\begin{assumption}[Bounded Noise]\label{ass:bounded_noise}
The noise \(\xi_t\) satisfies a hard bound \(d_{\Xi}(\xi_t,\;0)\le \sigma_\xi\).
\end{assumption}
Although real-world noise is not strictly bounded, sensor filtering, digital clamping, and physical constraints keep it effectively within limits, which underpins many robust/optimal control theories~\cite{572756, camacho2013mpc}.

\begin{lemma}[Spatially Bounded Neighborhood]\label{lem:radius}
Under Assumptions~\ref{ass:transition_lips}–\ref{ass:bounded_noise},
any two states \(s_{t}^{(1)},\; s_{t}^{(2)}\sim\mathrm{sim}(s_{t})\) satisfy
\begin{equation}\label{eq:bounded_neighbor}
  d_S(s_{t}^{(1)} ,\; s_{t}^{(2)})\le 2K_\xi(s_{t-1},\;a_{t-1})\sigma_\xi.
\end{equation}
\end{lemma}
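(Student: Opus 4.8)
The plan is to exploit the fact that the two similar states are generated by the same transition map applied to the same preceding state--action pair, differing only in their noise realizations. Since $\mathrm{sim}(s_t) = P(\cdot \mid s_{t-1})$ is exactly the transition distribution out of $s_{t-1}$, any sample $s_t^{(i)} \sim \mathrm{sim}(s_t)$ can be written as $s_t^{(i)} = T(s_{t-1}, a_{t-1}, \xi_{t-1}^{(i)})$ for some noise $\xi_{t-1}^{(i)} \in \Xi$. First I would make this representation explicit, so that the shared arguments $(s_{t-1}, a_{t-1})$ are held fixed and only the noise differs between the two samples.

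With this representation in hand, I would invoke Assumption~\ref{ass:transition_lips} directly. Because $s_t^{(1)}$ and $s_t^{(2)}$ share the same $(s_{t-1}, a_{t-1})$, the local Lipschitz bound of the transition in its noise argument gives
\[
d_S(s_t^{(1)},\, s_t^{(2)}) \le K_\xi(s_{t-1}, a_{t-1})\, d_\Xi(\xi_{t-1}^{(1)},\, \xi_{t-1}^{(2)}).
\]

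Next I would bound the noise distance using the triangle inequality together with Assumption~\ref{ass:bounded_noise}. Inserting the origin as an intermediate point gives $d_\Xi(\xi_{t-1}^{(1)},\, \xi_{t-1}^{(2)}) \le d_\Xi(\xi_{t-1}^{(1)},\, 0) + d_\Xi(0,\, \xi_{t-1}^{(2)})$, and since each noise term is bounded in magnitude by $\sigma_\xi$, the right-hand side is at most $2\sigma_\xi$. Substituting this into the Lipschitz bound yields the claimed inequality $d_S(s_t^{(1)},\, s_t^{(2)}) \le 2K_\xi(s_{t-1}, a_{t-1})\sigma_\xi$.

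The result follows essentially by composing the two assumptions, so I do not expect a deep obstacle. The only point requiring care is the first step: justifying that two samples from $\mathrm{sim}(s_t)$ genuinely correspond to the same fixed $(s_{t-1}, a_{t-1})$ with only the noise varying. This hinges on reading Definition~\ref{df:sim_state} as conditioning on a single fixed preceding state (and the action taken there), so that $K_\xi(s_{t-1}, a_{t-1})$ is a well-defined constant across the entire neighborhood rather than something that drifts from sample to sample. Once that interpretation is pinned down, the triangle-inequality step and the final substitution are routine.
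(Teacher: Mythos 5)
Your proposal is correct and follows essentially the same route as the paper: represent both samples as $T(s_{t-1}, a_{t-1}, \xi_{t-1}^{(i)})$ with shared $(s_{t-1}, a_{t-1})$, apply Assumption~\ref{ass:transition_lips}, and bound $d_\Xi(\xi_{t-1}^{(1)}, \xi_{t-1}^{(2)}) \le 2\sigma_\xi$ via Assumption~\ref{ass:bounded_noise}. The only difference is that you spell out the triangle-inequality step through the origin, which the paper leaves implicit---a minor gain in rigor, not a different argument.
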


\begin{proof}
By the Lipschitz continuity assumption on the transition function,
\begin{equation}\label{eq:transition_lips_proof}
\begin{split}
  &d_S(s_{t}^{(1)} ,\; s_{t}^{(2)}) \\
  &= d_S(T(s_{t-1},\;a_{t-1},\;\xi_{t-1}^{(1)}) ,\; T(s_{t-1},\;a_{t-1},\;\xi_{t-1}^{(2)})) \\
  &\le K_\xi(s_{t-1},\;a_{t-1})\,d_{\Xi}(\xi_{t-1}^{(1)} ,\; \xi_{t-1}^{(2)}).
\end{split}
\end{equation}
Under the assumption of bounded noise \(d_{\Xi}(\xi_{t-1},\;0)\le \sigma_\xi\), we have \(d_{\Xi}(\xi_{t-1}^{(1)} ,\; \xi_{t-1}^{(2)})\le 2\sigma_\xi\), which yields the desired result.
\end{proof}
Therefore, according to Definition~\ref{df:sim_state} and Lemma~\ref{lem:radius}, the distance between all similar state pairs is guaranteed to be bounded by \(2K_\xi\sigma_\xi\), forming a bounded region suitable for applying the local Lipschitz constraint.

Furthermore, by Definition~\ref{df:sim_state}, our similar state distribution coincides exactly with the true transition kernel \(P_{\mathrm{real}}(\,\cdot\mid s_{t-1})\), thereby ensuring fidelity to the system dynamics.

\subsection{Composite Function Based Lipschitz Constraint}
\begin{theorem}[Composite Lipschitz Constraint]\label{thm:composite_lips}
Under Assumptions~\ref{ass:transition_lips}-\ref{ass:bounded_noise}, for all $(s_{t-1},a_{t-1})\in\mathcal{S}\times\mathcal{A}$ and $\xi_{t-1}^{(1)},\xi_{t-1}^{(2)}\in\Xi$, it holds that
\begin{equation}\label{eq:composit_lips}
\begin{split}
  &d_A(f\circ T(s_{t-1},a_{t-1},\xi_{t-1}^{(1)}) ,\; f\circ T(s_{t-1},a_{t-1},\xi_{t-1}^{(2)})) \\
  &\le K\,d_S( T(s_{t-1},a_{t-1},\xi_{t-1}^{(1)}) ,\; T(s_{t-1},a_{t-1},\xi_{t-1}^{(2)})),\\
  &\forall\,\xi_1, \xi_2\in\Xi.
\end{split}
\end{equation}
Therefore, the composite function $f\circ T$ is locally Lipschitz continuous in the noise $\xi$, with local Lipschitz constant
\begin{equation}\label{eq:composit_lips_const}
K_{\mathrm{comp}}(s_{t-1},a_{t-1}) \;=\; K\,K_\xi(s_{t-1},a_{t-1}).
\end{equation}
\end{theorem}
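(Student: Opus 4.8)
The plan is to treat the claim as a standard composition-of-Lipschitz-maps argument, where the only genuine subtlety is that the actor \(f\) is merely \emph{locally} Lipschitz; hence I must first confirm that both transitioned states share a common neighborhood on which a single constant \(K\) is valid.

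First I would fix an arbitrary pair \((s_{t-1}, a_{t-1})\) together with noise realizations \(\xi_{t-1}^{(1)}, \xi_{t-1}^{(2)}\in\Xi\), and abbreviate \(s_t^{(i)} = T(s_{t-1}, a_{t-1}, \xi_{t-1}^{(i)})\) for \(i=1,2\). By Lemma~\ref{lem:radius} these two states obey \(d_S(s_t^{(1)}, s_t^{(2)}) \le 2 K_\xi(s_{t-1}, a_{t-1})\sigma_\xi\), so they lie inside one bounded neighborhood \(U\subseteq\mathcal{S}\). On \(U\) the actor satisfies the local Lipschitz constraint with constant \(K = K_{\mathrm{loc}}(U)\), which yields the first displayed inequality \(d_A(f(s_t^{(1)}), f(s_t^{(2)})) \le K\, d_S(s_t^{(1)}, s_t^{(2)})\) directly.

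Second I would chain this with Assumption~\ref{ass:transition_lips}. Substituting the transition bound \(d_S(s_t^{(1)}, s_t^{(2)}) \le K_\xi(s_{t-1}, a_{t-1})\, d_{\Xi}(\xi_{t-1}^{(1)}, \xi_{t-1}^{(2)})\) into the right-hand side produces \(d_A(f\circ T(\cdot,\xi_{t-1}^{(1)}),\, f\circ T(\cdot,\xi_{t-1}^{(2)})) \le K\,K_\xi(s_{t-1}, a_{t-1})\, d_{\Xi}(\xi_{t-1}^{(1)}, \xi_{t-1}^{(2)})\). Reading this against the definition of local Lipschitz continuity in the noise variable identifies the composite constant as \(K_{\mathrm{comp}}(s_{t-1}, a_{t-1}) = K\, K_\xi(s_{t-1}, a_{t-1})\), which completes the argument.

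The main obstacle is not the algebra—chaining Lipschitz bounds is routine—but the locality bookkeeping: because \(f\) carries only a local constant, I cannot invoke a single \(K\) until I know both images \(s_t^{(1)}, s_t^{(2)}\) fall in one neighborhood. This is precisely where Lemma~\ref{lem:radius} is indispensable, as it converts the bounded-noise hypothesis into a bounded state-displacement guarantee, ensuring the local Lipschitz constant of \(f\) is well-defined and uniform across the pair. I would also emphasize that \(K_\xi\) is evaluated at the fixed pair \((s_{t-1}, a_{t-1})\), so the resulting \(K_{\mathrm{comp}}\) is genuinely a \emph{local}, state-action-dependent constant rather than a global one, consistent with the local Lipschitz framing the method relies upon.
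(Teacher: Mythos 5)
Your proposal is correct and follows essentially the same route as the paper: invoke Lemma~\ref{lem:radius} to place the two transitioned states \(s_t^{(1)}, s_t^{(2)}\) in a bounded neighborhood of radius \(2K_\xi(s_{t-1},a_{t-1})\sigma_\xi\), apply the actor's local Lipschitz bound with constant \(K\) there, and chain with the transition bound of Assumption~\ref{ass:transition_lips} to read off \(K_{\mathrm{comp}}(s_{t-1},a_{t-1}) = K\,K_\xi(s_{t-1},a_{t-1})\). The one caveat you share with the paper's own argument is the implicit step that a \emph{single} constant \(K\) is valid on the whole \(2K_\xi\sigma_\xi\)-ball---local Lipschitz continuity per se only guarantees a constant on some possibly smaller neighborhood of each point, so strictly one needs the standard fact that a locally Lipschitz map is Lipschitz on compact sets; your ``locality bookkeeping'' paragraph flags exactly the right issue, and resolving it fully would require that one extra remark.
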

\noindent In this theorem, \( T \) represents the transition function, and \( f \) represents the actor. 

Theorem \ref{thm:composite_lips} can be reduced to loss optimization problem such as
\begin{equation}\label{eq:def_loss_smooth}
L=\left\| \pi_{\phi}(s_t)-\mathbb{E}_{\tilde{s}_t\sim P(\cdot\mid s_{t-1})}\bigl[\pi_{\phi}(\tilde{s}_t)\bigr] \right\|_2^2.
\end{equation}
All proofs for this paper can be found in the Appendix A.
\(\mathbb{E}_{\tilde{s}_t\sim P(\cdot\mid s_{t-1})}[\pi_{\phi}(\tilde{s}_t)]\) denotes the expectation of the policy outputs over all next similar states \(\tilde{s}_t\) reachable from the predecessor state \(s_{t-1}\).
In other words, \(L\) is designed to minimize the difference between the policy output \(\pi_{\phi}(s_t)\) for a state \(s_t\) randomly sampled from the transition distribution
\(s_t \sim P(\,\cdot\mid s_{t-1}),\)
and the average policy output over the same distribution,
\(
\mathbb{E}_{\tilde{s}_t\sim P(\,\cdot\mid s_{t-1})}\bigl[\pi_{\phi}(\tilde{s}_t)\bigr].
\)
This mechanism propagates the local Lipschitz property of the transition function \(T\) through the composite function \(f\circ T\), thereby enforcing the local Lipschitz constraint on the actor network \(f\) during training.

\subsection{Action Smoothing via Predictions from Preceding States}
In this section, we introduce ASAP, a novel RL action smoothing method using loss-penalty that integrates the spatial smoothness term grounded in the similar state definition based on transition (Definition~\ref{df:sim_state}) with the temporal smoothness term proposed by Grad-CAPS \cite{lee2024gradient}. 
We define the ASAP policy loss as 
\begin{equation}  \label{eq:asap_policy_loss}
J^{\rm ASAP}_{\pi_\phi} = J_{\pi_\phi} + \lambda_S L_S + \lambda_P L_P + \lambda_T L_T,
\end{equation}
which comprises four terms.
$J_{\pi_{\phi}}$ denotes the standard RL actor loss used in algorithms such as Proximal Policy Optimization (PPO) \cite{schulman2017proximal} and Soft Actor-Critic (SAC) \cite{haarnoja2018soft}. 

The spatial loss \(L_{\rm S}\) is defined as 
\begin{equation}\label{eq:asap_loss_smooth}
    L_{\rm S}
    = \bigl\|\pi_{\phi}(s_t) - \texttt{stopgrad}(\pi_{\rm P}(s_{t-1}))\bigr\|_2^2,
\end{equation}
which, as shown in Figure~\ref{fig:sim_asap} and \ref{fig:update_procedure}, minimizes the difference between the current action \(\pi_{\phi}(s_t)\) and the next expected action \(\pi_{\rm P}(s_{t-1})\) predicted from the previous state \(s_{t-1}\).
The prediction loss \(L_{\rm P}\) is expressed as
\begin{equation}\label{eq:asap_loss_pred}
    L_{\rm P}
    = \bigl\|\pi_{\rm P}(s_{t-1}) - \texttt{stopgrad}(\pi_{\phi}(s_t))\bigr\|_2^2,
\end{equation}
and trains the predictor \(\pi_{\rm P}\) to closely mimic the policy output \(\pi_{\phi}(s_t)\), illustrated in Figure~\ref{fig:update_procedure}.
Because \(\pi_{\phi}\) generates actual actions while \(\pi_{P}\) provides target action expectations, they differ in objectives, learning signals, and influencing factors and thus require separate learning strengths.
Therefore, we fix the target values, divide the loss into Eq.~\ref{eq:asap_loss_smooth} and Eq.~\ref{eq:asap_loss_pred}, and assign distinct strength parameters to each component.

\(L_T\) is the temporal Lipschitz penalty directly adopted from the Grad-CAPS~\cite{lee2024gradient} temporal loss and defined as 
\begin{equation}\label{eq:gradcaps_loss}
L_{\rm T} \;=\; \Bigl\|\frac{a_{t+1} - 2a_t + a_{t-1}}{\tanh(a_{t+1} - a_{t-1}) + \epsilon}\Bigr\|_2^2,
\end{equation}
where $\epsilon$ is small positive constant added for numerical stability to avoid division by zero.
\(L_{\rm T}\) constrains the second‐order change in action over time, and the denominator $\tanh(a_{t+1} - a_{t-1})$ provides robustness to variations in the action scale.
The second-order difference enables more flexible action changes while maintaining stability.
The regularization strength of each term is controlled by the hyperparameters \(\lambda_S, \lambda_P, \lambda_T > 0\).

\subsubsection{Training Method.}
Predictor training faces a ``moving target'' issue, as shifting outputs hinder smoothness optimization and can degrade returns. 
Although soft updates help stabilize training \cite{lillicrap2015continuous,fujimoto2018addressing}, their lag in adapting to rapidly changing distributions can impair performance in on-policy methods like PPO. 
This issue can be addressed by increasing the number of parallel environments to collect more data under the same policy and by reducing the predictor loss weight \(\lambda_P\).

\section{Experiment}

\subsection{Experimental Setup}
\subsubsection{Simulation Framework.}
We evaluated ASAP using two simulation frameworks: \textbf{Gymnasium} \cite{towers2024gymnasium} and \textbf{Isaac‑Lab} \cite{mittal2023orbit}.
Gymnasium, integrated with the MuJoCo \cite{todorov2012mujoco} is a widely adopted benchmark for continuous RL; we used it to empirically validate ASAP's theoretical foundations and evaluate performance. 
To assess applicability to realistic robotic scenarios, we assessed our method on Isaac‑Lab, a high-fidelity simulator.

\subsubsection{Implementation Details.}
All Gymnasium experiments were implemented using Stable Baselines3 (SB3) \cite{stable-baselines3}.
The experiments were conducted under both PPO and SAC settings, with baselines including original SAC/PPO, CAPS \cite{mysore2021regularizing}, L2C2 \cite{kobayashi2022l2c2}, and Grad-CAPS \cite{lee2024gradient} (hereafter GRAD).
Isaac-Lab experiments were built on rl-games \cite{rl-games2021} with domain randomization and observation noise for realism, and were compared against original PPO. 
We fixed preset random seeds for training and evaluation. 
Full details of the experiment setup are provided in Appendix~B.
Figure~\ref{fig:arch} shows our ASAP implementation.
To implement the predictor, we added a prediction head to the actor's MLP in addition to the action head.
The action head is trained with \(L_S\) (Eq.~\ref{eq:asap_loss_smooth}), while the prediction head is trained with \(L_P\) (Eq.~\ref{eq:asap_loss_pred}).

\subsubsection{Evaluation Metrics.}
Quantitative evaluation uses two metrics: \textbf{Cumulative Return (\emph{re})} and \textbf{Smoothness Score (\emph{sm})}.
The cumulative return serves as a standard metric for assessing overall task performance.
To quantify action oscillations, we adopted a smoothness measurement method based on the FFT frequency spectrum introduced in prior work \cite{mysore2021regularizing, christmann2024benchmarking} :
\begin{equation}\label{eq:smoothness}
Sm \;=\; \frac{2}{n\,f_s} \sum_{i=1}^{n} M_i\,f_i,
\end{equation}
where $f_i$ and $M_i$ denote the frequency and amplitude of the $i$-th component, and $f_s$ denotes the sampling frequency. 
This metric computes a weighted average over the frequency components, reflecting both their magnitudes and frequencies.  
Higher values indicate the presence of more high-frequency components, while lower values indicate fewer high-frequency components and smoother motion.

\begin{table}
    \centering
    {\fontsize{9pt}{\baselineskip}\selectfont
    \renewcommand{\tabularxcolumn}[1]{m{#1}}
    \begin{tabularx}{\linewidth}{
        >{\centering\arraybackslash}m{2.0cm}
        *{2}{>{\centering\arraybackslash}X>{\centering\arraybackslash\columncolor[gray]{.97}}X}
    }
    \toprule
    \multirow{2}{*}{Environment}
      & \multicolumn{2}{c}{SAC(Base)}
      & \multicolumn{2}{c}{SAC(Predictor)} \\
    \cmidrule(lr){2-3}\cmidrule(lr){4-5}
      & \emph{re}\,$\uparrow$ & \emph{sm}\,$\downarrow$
      & \emph{re}\,$\uparrow$ & \emph{sm}\,$\downarrow$ \\
    \midrule
    LunarLander      & \shortstack{280.1\\(19.5)} & \shortstack{0.296\\(0.063)}
             & \shortstack{278.8\\(20.2)} & \shortstack{0.227\\(0.041)} \\
    Reacher & \shortstack{-3.61\\(1.40)}   & \shortstack{0.051\\(0.017)}
             & \shortstack{-3.57\\(1.42)} & \shortstack{0.048\\(0.015)} \\
    Hopper & \shortstack{3330\\(412)}   & \shortstack{0.857\\(0.061)}
             & \shortstack{3478\\(182)} & \shortstack{0.712\\(0.085)} \\
    Walker & \shortstack{4467\\(435)}   & \shortstack{0.836\\(0.131)}
             & \shortstack{4419\\(641)} & \shortstack{0.715\\(0.138)} \\
    \bottomrule
    \end{tabularx}
    }
    \caption{Cumulative return (\emph{re}) and smoothness score (\emph{sm}) of SAC before and after fine-tuning with predictor across Gymnasium. Standard deviations shown in parentheses.}
    \label{tab:feasibility}
\end{table}

\begin{table*}[t]
    \centering
    {\fontsize{9pt}{\baselineskip}\selectfont
    \renewcommand{\tabularxcolumn}[1]{m{#1}}
    \begin{tabularx}{\linewidth}{
        >{\centering\arraybackslash}m{1.3cm}
        *{6}{>{\centering\arraybackslash}X>{\centering\arraybackslash\columncolor[gray]{.97}}X}
    }
    \toprule
    \multirow{2}{*}{Method}
      & \multicolumn{2}{c}{LunarLander}
      & \multicolumn{2}{c}{Pendulum}
      & \multicolumn{2}{c}{Reacher}
      & \multicolumn{2}{c}{Ant}
      & \multicolumn{2}{c}{Hopper}
      & \multicolumn{2}{c}{Walker} \\
    \cmidrule(lr){2-3}\cmidrule(lr){4-5}\cmidrule(lr){6-7}
    \cmidrule(lr){8-9}\cmidrule(lr){10-11}\cmidrule(lr){12-13}
      & \emph{re}\,$\uparrow$ & \emph{sm}\,$\downarrow$
      & \emph{re}\,$\uparrow$ & \emph{sm}\,$\downarrow$
      & \emph{re}\,$\uparrow$ & \emph{sm}\,$\downarrow$
      & \emph{re}\,$\uparrow$ & \emph{sm}\,$\downarrow$
      & \emph{re}\,$\uparrow$ & \emph{sm}\,$\downarrow$
      & \emph{re}\,$\uparrow$ & \emph{sm}\,$\downarrow$ \\
    \midrule
PPO Base & 
  \shortstack{172.8\\(100.2)} & \shortstack{0.309\\(0.071)} &
  \shortstack{\textbf{-169.4}\\(93.4)} & \shortstack{0.383\\(0.102)} &
  \shortstack{-5.67\\(2.12)} & \shortstack{0.034\\(0.015)} &
  \shortstack{1434\\(634)} & \shortstack{1.497\\(0.391)} &
  \shortstack{\textbf{2902}\\(929)} & \shortstack{1.709\\(0.524)} &
  \shortstack{\underline{2654}\\(1127)} & \shortstack{1.764\\(0.511)} \\

CAPS & 
  \shortstack{187.1\\(104.0)} & \shortstack{\underline{0.246}\\(0.054)} &
  \shortstack{-238.2\\(172.6)} & \shortstack{\underline{0.323}\\(0.105)} &
  \shortstack{\underline{-5.20}\\(1.83)} & \shortstack{\textbf{0.028}\\(0.014)} &
  \shortstack{2185\\(837)} & \shortstack{1.259\\(0.253)} &
  \shortstack{2362\\(981)} & \shortstack{0.281\\(0.057)} &
  \shortstack{2179\\(1154)} & \shortstack{0.565\\(0.111)} \\

L2C2 & 
  \shortstack{\underline{197.6}\\(95.5)} & \shortstack{0.292\\(0.069)} &
  \shortstack{-203.0\\(164.7)} & \shortstack{0.387\\(0.094)} &
  \shortstack{\textbf{-4.67}\\(1.93)} & \shortstack{0.039\\(0.015)} &
  \shortstack{1393\\(749)} & \shortstack{1.459\\(0.425)} &
  \shortstack{2345\\(1048)} & \shortstack{1.344\\(0.631)} &
  \shortstack{2014\\(1102)} & \shortstack{1.686\\(0.604)} \\

GRAD & 
  \shortstack{188.0\\(105.6)} & \shortstack{0.298\\(0.070)} &
  \shortstack{-199.1\\(161.4)} & \shortstack{0.336\\(0.119)} &
  \shortstack{-5.48\\(2.16)} & \shortstack{0.031\\(0.015)} &
  \shortstack{\underline{2419}\\(804)} & \shortstack{\underline{1.121}\\(0.172)} &
  \shortstack{\underline{2737}\\(901)} & \shortstack{\underline{0.193}\\(0.034)} &
  \shortstack{1967\\(1189)} & \shortstack{\textbf{0.342}\\(0.082)} \\

ASAP & 
  \shortstack{\textbf{200.7}\\(105.6)} & \shortstack{\textbf{0.221}\\(0.061)} &
  \shortstack{\underline{-178.6}\\(100.8)} & \shortstack{\textbf{0.318}\\(0.119)} &
  \shortstack{-5.44\\(1.99)} & \shortstack{\textbf{0.028}\\(0.013)} &
  \shortstack{\textbf{2574}\\(814)} & \shortstack{\textbf{1.092}\\(0.215)} &
  \shortstack{2691\\(1004)} & \shortstack{\textbf{0.179}\\(0.034)} &
  \shortstack{\textbf{3128}\\(1045)} & \shortstack{\underline{0.345}\\(0.094)} \\
    \bottomrule
    \end{tabularx}
        }
    \caption{Cumulative return (\emph{re}) and smoothness score (\emph{sm}) on Gymnasium benchmark under PPO setting. Higher \emph{re} and lower \emph{sm} are better. Bold indicates best, and underlined the second-best, per environment. Standard deviations shown in parentheses.} \label{tab:ppo_metrics}
\end{table*}

\begin{table*}[t]
    \centering
    {\fontsize{9pt}{\baselineskip}\selectfont
    \renewcommand{\tabularxcolumn}[1]{m{#1}}
    \begin{tabularx}{\linewidth}{
        >{\centering\arraybackslash}m{1.3cm}
        *{6}{>{\centering\arraybackslash}X>{\centering\arraybackslash\columncolor[gray]{.97}}X}
    }
    \toprule
    \multirow{2}{*}{Method}
      & \multicolumn{2}{c}{LunarLander}
      & \multicolumn{2}{c}{Pendulum}
      & \multicolumn{2}{c}{Reacher}
      & \multicolumn{2}{c}{Ant}
      & \multicolumn{2}{c}{Hopper}
      & \multicolumn{2}{c}{Walker} \\
    \cmidrule(lr){2-3}\cmidrule(lr){4-5}\cmidrule(lr){6-7}
    \cmidrule(lr){8-9}\cmidrule(lr){10-11}\cmidrule(lr){12-13}
      & \emph{re}\,$\uparrow$ & \emph{sm}\,$\downarrow$
      & \emph{re}\,$\uparrow$ & \emph{sm}\,$\downarrow$
      & \emph{re}\,$\uparrow$ & \emph{sm}\,$\downarrow$
      & \emph{re}\,$\uparrow$ & \emph{sm}\,$\downarrow$
      & \emph{re}\,$\uparrow$ & \emph{sm}\,$\downarrow$
      & \emph{re}\,$\uparrow$ & \emph{sm}\,$\downarrow$ \\
    \midrule
    SAC Base &
      \shortstack{\underline{279.2}\\(18.1)} & \shortstack{0.289\\(0.052)} &
      \shortstack{\textbf{-145.6}\\(69.6)} & \shortstack{0.421\\(0.141)} &
      \shortstack{\textbf{-3.57}\\(1.41)} & \shortstack{0.051\\(0.016)} &
      \shortstack{4239\\(1552)} & \shortstack{1.715\\(0.475)} &
      \shortstack{3349\\(385)} & \shortstack{0.856\\(0.062)} &
      \shortstack{4476\\(392)} & \shortstack{0.823\\(0.128)} \\

    CAPS &
      \shortstack{251.0\\(87.2)} & \shortstack{\underline{0.259}\\(0.057)} &
      \shortstack{-149.4\\(68.9)} & \shortstack{\textbf{0.304}\\(0.119)} &
      \shortstack{\textbf{-3.57}\\(1.38)} & \shortstack{0.046\\(0.014)} &
      \shortstack{\underline{4532}\\(1526)} & \shortstack{1.739\\(0.485)} &
      \shortstack{3413\\(33)} & \shortstack{0.793\\(0.085)} &
      \shortstack{4320\\(302)} & \shortstack{0.815\\(0.081)} \\

    L2C2 &
      \shortstack{249.3\\(90.4)} & \shortstack{0.349\\(0.122)} &
      \shortstack{-146.2\\(69.1)} & \shortstack{0.383\\(0.145)} &
      \shortstack{\textbf{-3.57}\\(1.38)} & \shortstack{0.051\\(0.016)} &
      \shortstack{4125\\(1420)} & \shortstack{1.811\\(0.395)} &
      \shortstack{\textbf{3455}\\(53)} & \shortstack{0.893\\(0.092)} &
      \shortstack{\underline{4472}\\(558)} & \shortstack{0.857\\(0.183)} \\

    GRAD &
      \shortstack{250.9\\(86.7)} & \shortstack{\underline{0.248}\\(0.055)} &
      \shortstack{\textbf{-145.6}\\(69.6)} & \shortstack{0.345\\(0.123)} &
      \shortstack{-3.62\\(1.47)} & \shortstack{\textbf{0.043}\\(0.013)} &
      \shortstack{3938\\(2483)} & \shortstack{\underline{1.423}\\(0.410)} &
      \shortstack{3190\\(628)} & \shortstack{\underline{0.588}\\(0.115)} &
      \shortstack{4339\\(445)} & \shortstack{\underline{0.612}\\(0.061)} \\

    ASAP &
      \shortstack{\textbf{280.9}\\(19.5)} & \shortstack{\textbf{0.185}\\(0.038)} &
      \shortstack{-145.7\\(68.9)} & \shortstack{\underline{0.338}\\(0.107)} &
      \shortstack{-3.61\\(1.43)} & \shortstack{\underline{0.045}\\(0.015)} &
      \shortstack{\textbf{4966}\\(1467)} & \shortstack{\textbf{1.226}\\(0.267)} &
      \shortstack{\underline{3448}\\(313)} & \shortstack{\textbf{0.498}\\(0.077)} &
      \shortstack{\textbf{4665}\\(494)} & \shortstack{\textbf{0.578}\\(0.096)} \\
    \bottomrule
    \end{tabularx}
        }
    \caption{Cumulative return (\emph{re}) and smoothness score (\emph{sm}) on Gymnasium benchmark under SAC setting. Higher \emph{re} and lower \emph{sm} are better. Bold indicates best, and underlined the second-best, per environment. Standard deviations shown in parentheses.} \label{tab:sac_metrics}
\end{table*}

\begin{table*}[t]
    \centering
    {\fontsize{9pt}{\baselineskip}\selectfont
    \renewcommand{\tabularxcolumn}[1]{m{#1}}  
    \begin{tabularx}{\linewidth}{
        >{\centering\arraybackslash}m{2.1cm}
        *{4}{>{\centering\arraybackslash}X>{\centering\arraybackslash\columncolor[gray]{.97}}X}
    }
    \toprule
    \multirow{2}{*}{Method}
      & \multicolumn{2}{c}{Reach-Franka}    
      & \multicolumn{2}{c}{Lift-Cube-Franka}      
      & \multicolumn{2}{c}{Repose-Cube-Allegro}
      & \multicolumn{2}{c}{Anymal-Velocity-Rough} \\    
    \cmidrule(lr){2-3}\cmidrule(lr){4-5}\cmidrule(lr){6-7}\cmidrule(lr){8-9}
      & \emph{re}\,$\uparrow$ & \emph{sm}\,$\downarrow$
      & \emph{re}\,$\uparrow$ & \emph{sm}\,$\downarrow$
      & \emph{re}\,$\uparrow$ & \emph{sm}\,$\downarrow$
      & \emph{re}\,$\uparrow$ & \emph{sm}\,$\downarrow$\\
    \midrule
    PPO Base &
      \shortstack{0.380\\(0.233)} &
      \shortstack{0.959\\(0.113)} &
      \shortstack{\textbf{136.1}\\(15.1)} &
      \shortstack{2.315\\(0.873)} &
      \shortstack{31.90\\(13.09)} &
      \shortstack{2.658\\(0.612)} &
      \shortstack{\textbf{16.69}\\(2.66)} &
      \shortstack{3.502\\(0.285)} \\

    ASAP &
      \shortstack{\textbf{0.525}\\(0.195)} &
      \shortstack{\textbf{0.658}\\(0.065)} &
      \shortstack{134.0\\(32.2)} &
      \shortstack{\textbf{0.926}\\(0.305)} &
      \shortstack{\textbf{32.82}\\(15.91)} &
      \shortstack{\textbf{2.363}\\(0.558)} &
      \shortstack{16.09\\(2.92)} &
      \shortstack{\textbf{2.861}\\(0.306)} \\
    \bottomrule
    \end{tabularx}
    }
    \caption{Cumulative return (\emph{re}) and smoothness score (\emph{sm}) for PPO Base and PPO + ASAP on the Isaac-Lab environment. Higher \emph{re} and lower \emph{sm} are better. Bold indicates the best per environment. Standard deviations shown in parentheses.}
    \label{tab:isaac_metrics}
\end{table*}

\begin{table}[t]
    \centering
    {\fontsize{9pt}{\baselineskip}\selectfont
    \renewcommand{\tabularxcolumn}[1]{m{#1}}
    \begin{tabularx}{\linewidth}{
        >{\centering\arraybackslash}m{2.0cm}
        *{2}{>{\centering\arraybackslash}X>{\centering\arraybackslash\columncolor[gray]{.97}}X}
    }
    \toprule
    \multirow{2}{*}{Method}
      & \multicolumn{2}{c}{Hopper}
      & \multicolumn{2}{c}{Walker} \\
    \cmidrule(lr){2-3}\cmidrule(lr){4-5}
      & \emph{re}\,$\uparrow$ & \emph{sm}\,$\downarrow$
      & \emph{re}\,$\uparrow$ & \emph{sm}\,$\downarrow$ \\
    \midrule
    LipsNet &
      \shortstack{2798\\(665)} & \shortstack{0.838\\(0.143)} &
      \shortstack{3942\\(462)} & \shortstack{0.915\\(0.178)} \\

    Lips+CAPS &
      \shortstack{3096\\(21)} & \shortstack{0.650\\(0.046)} &
      \shortstack{3464\\(1377)} & \shortstack{0.665\\(0.143)} \\

    Lips+ASAP &
      \shortstack{\textbf{3105}\\(45)} & \shortstack{\textbf{0.381}\\(0.021)} &
      \shortstack{\textbf{4475}\\(542)} & \shortstack{\textbf{0.485}\\(0.103)} \\
    \bottomrule
    \end{tabularx}
        }
    \caption{Measurements of \emph{re} and \emph{sm} in SAC setting when combined with LipsNet.} \label{tab:comparison_lipsnet}
\end{table}

\subsection{Effectiveness on Transition-induced Similar State}

We conducted an experiment to empirically validate the effectiveness of the proposed transition-induced similar state (Definition~\ref{df:sim_state}). 
Specifically, a SAC policy was fine-tuned using a predictor trained via supervised learning on a fixed replay buffer. 
The details are described in Appendix B.

As shown in Table~\ref{tab:feasibility}, this fine-tuning procedure consistently reduced the smoothness score, with an average improvement of 15.1\%.
In particular, environments with more pronounced high-frequency oscillations, such as Hopper and Walker, exhibited reductions of up to 16.9\% and 14.4\%, respectively. 
Meanwhile, the cumulative return remained stable without significant changes.
These results suggested that the transition-induced similar state effectively suppressed action oscillations while preserving task performance.

\subsection{Evaluation on Gymnasium}

Table~\ref{tab:ppo_metrics} presents the results under the PPO setting.  
ASAP achieved the highest cumulative return in 3 out of 6 environments and recorded the best smoothness score in 5 environments.
On average, the proposed method reduced \emph{sm} by 43.3\% compared to the PPO baseline, with particularly large improvements observed in Hopper (89.5\%) and Walker (80.4\%).  
In the Hopper environment, the slight decrease in \emph{re} appears to result from excessive spatial smoothing in regions requiring rapid action changes. 
Nonetheless, our approach minimized the reduction in \emph{re}, preserving reward stability. 
In contrast, CAPS and L2C2 reduced \emph{sm} in Hopper by 83.5\% and 21.3\%, respectively, both of which are smaller than the reduction achieved by the proposed method, and they experienced a larger drop in \emph{re}. 
This suggested that the heuristic similar state definitions employed by these methods failed to form neighborhood structures that adequately satisfy the optimal Lipschitz constraint.

ASAP also showed strong performance in the SAC setting (Table~\ref{tab:sac_metrics}), achieving the highest \emph{re} in 3 out of 6 environments and the best \emph{sm} in 4 environments.
On average, the proposed method reduced \emph{sm} by 27.9\% compared to the SAC baseline, while maintaining a similar level of \emph{re}.  
Notably, our approach consistently matched or outperformed Grad-CAPS in both \emph{re} and \emph{sm}, suggesting that the Grad-based temporal term and our spatial regularization in ASAP complement each other effectively.  
These results demonstrated that the proposed method achieves a desirable balance between high reward and smoothness.

\begin{table}[t]
    \centering
    {\fontsize{9pt}{\baselineskip}\selectfont
    \renewcommand{\tabularxcolumn}[1]{m{#1}}
    \begin{tabularx}{\linewidth}{
        >{\centering\arraybackslash}m{1.0cm}
        >{\centering\arraybackslash}m{1.0cm}
        *{2}{>{\centering\arraybackslash}X>{\centering\arraybackslash\columncolor[gray]{.97}}X}
    }
    \toprule
    \multirow{2}{*}{$L_S$} 
      & \multirow{2}{*}{$L_T$} 
      & \multicolumn{2}{c}{Hopper} 
      & \multicolumn{2}{c}{Walker} \\
    \cmidrule(lr){3-4}\cmidrule(lr){5-6}
      & 
      & \emph{re}\,$\uparrow$ & \emph{sm}\,$\downarrow$ 
      & \emph{re}\,$\uparrow$ & \emph{sm}\,$\downarrow$ \\
    \midrule
    -      & GRAD   & \shortstack{\textbf{2963}\\(690)}  & \shortstack{0.241\\(0.051)}
                    & \shortstack{\underline{2659}\\(1176)} & \shortstack{0.541\\(0.188)} \\
    CAPS   & GRAD   & \shortstack{2264\\(1033)} & \shortstack{\underline{0.201}\\(0.044)}
                    & \shortstack{2303\\(1132)} & \shortstack{\underline{0.467}\\(0.118)} \\
    L2C2   & GRAD   & \shortstack{\underline{2925}\\(705)}  & \shortstack{0.227\\(0.035)}
                    & \shortstack{2500\\(980)}  & \shortstack{0.519\\(0.151)} \\
    ASAP   & GRAD   & \shortstack{2691\\(1004)} & \shortstack{\textbf{0.179}\\(0.034)}
                    & \shortstack{\textbf{3128}\\(1045)} & \shortstack{\textbf{0.345}\\(0.094)} \\
    \bottomrule
    \end{tabularx}
    }
    \caption{Measurements of \emph{re} and \emph{sm} when the temporal term is fixed to GRAD and combined with various spatial terms.}
    \label{tab:ablation_spatial}
\end{table}
\subsection{Applicability in Robotic Scenarios}
As shown in Table \ref{tab:isaac_metrics}, ASAP was successfully applied to robotics scenarios, improving or maintaining cumulative return while lowering smoothness across all four tasks.
In particular, in the Franka reach task, it improved \emph{re} and decreased the \emph{sm} by 31.3\%, yielding substantial improvements in both metrics.
Figure~\ref{fig:franka_reach} depicts the magnitude of the action difference for joint 1 between consecutive steps.
The mean error between the end effector and the target also decreased by 30.3\%, from 0.90 cm to 0.62 cm, suggesting that smoothing constraints are beneficial for tasks requiring high precision and sustained stability.
In other tasks that prioritize dynamic interaction with the environment over fine-grained precision, such as Repose-Cube and Velocity, no significant reward improvement was achieved.
Nevertheless, the consistent reward performance indicated that ASAP does not compromise the agent's responsiveness in these tasks.
However, in environments such as Lift-Cube, we observed an increase in the variance of \emph{re}, 
which we attribute to the oscillation suppression process constraining exploration and resulting in inadequate learning in a subset of random seeds.

\begin{figure}
    \centering
    \includegraphics[width=0.95\linewidth]{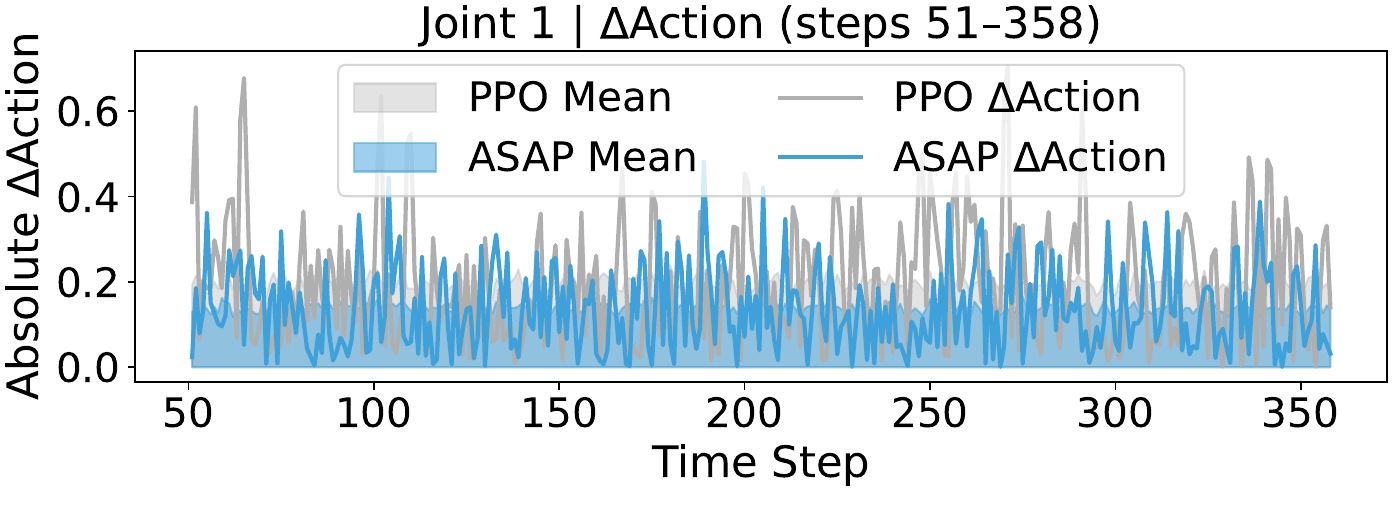}
    \caption{Action change ($\Delta$a) of Joint-1 in the Franka reach task for original PPO and ASAP. Solid line represents median $\Delta$a run; shaded area indicates mean across all seeds.}
    \label{fig:franka_reach}
\end{figure}

\subsection{Compatibility with Architectural Method}
We evaluated the compatibility of ASAP with architectural methods by integrating it with LipsNet under the SAC framework. 
For comparison, we adopted the original LipsNet~\cite{song2023lipsnet} and LipsNet+CAPS~\cite{christmann2024benchmarking} as baselines, and conducted experiments on the Hopper and Walker environments. 
As shown in Table~\ref{tab:comparison_lipsnet}, our method significantly improved the \emph{re} and \emph{sm} of LipsNet across all environments. 
In particular, the smoothness metric \emph{sm} was reduced by 54.5\% on Hopper and by 47.0\% on Walker, indicating a substantial improvement in action stability. 
Compared to the LipsNet+CAPS variant, our approach achieved 41.4\% and 27.1\% lower \emph{sm} on Hopper and Walker, respectively. 
These results demonstrated that ASAP can synergize with the architectural method.

\subsection{Ablation Study}
We verified that our spatial term represents the most effective strategy and evaluated its performance when combined with various temporal terms.
The experiments were conducted using PPO as the baseline algorithm and were limited to the Walker and Hopper environments.

\subsubsection{Comparison with Other Spatial Methods.}
We conducted experiments under four settings, fixing the temporal term to Grad-CAPS, as shown in Table~\ref{tab:ablation_spatial}.  
The CAPS spatial term reduced \emph{sm} by 15.1\% but caused a significant drop in \emph{re}. 
The L2C2 term showed minimal effect, reducing \emph{sm} by only 4.9\% with a slight decrease in \emph{re}. 
In contrast, our spatial term reduced \emph{sm} by 30.9 \%, achieving the largest oscillation reduction, while stably maintaining \emph{re}.
These results demonstrated that when the Grad-CAPS temporal term is combined with our proposed spatial term, it yields superior performance in reducing oscillations and preserving rewards compared to other methods.

\begin{table}[t]
    \centering
    {\fontsize{9pt}{\baselineskip}\selectfont
    \renewcommand{\tabularxcolumn}[1]{m{#1}}
    \begin{tabularx}{\linewidth}{
        >{\centering\arraybackslash}m{1.0cm}
        >{\centering\arraybackslash}m{1.0cm}
        *{2}{>{\centering\arraybackslash}X>{\centering\arraybackslash\columncolor[gray]{.97}}X}
    }
    \toprule
    \multirow{2}{*}{$L_S$} 
      & \multirow{2}{*}{$L_T$} 
      & \multicolumn{2}{c}{Hopper} 
      & \multicolumn{2}{c}{Walker} \\
    \cmidrule(lr){3-4}\cmidrule(lr){5-6}
      & 
      & \emph{re}\,$\uparrow$ & \emph{sm}\,$\downarrow$ 
      & \emph{re}\,$\uparrow$ & \emph{sm}\,$\downarrow$ \\
    \midrule
    -      & -      & \shortstack{2902\\(929)}  & \shortstack{1.709\\(0.524)}
                    & \shortstack{2654\\(1127)} & \shortstack{1.764\\(0.511)} \\
    -      & CAPS   & \shortstack{2962\\(797)}  & \shortstack{0.371\\(0.073)}
                    & \shortstack{2249\\(1174)} & \shortstack{0.872\\(0.209)} \\
    ASAP   & CAPS   & \shortstack{\textbf{2973}\\(706)}  & \shortstack{\underline{0.235}\\(0.047)}
                    & \shortstack{2357\\(1282)} & \shortstack{\underline{0.389}\\(0.076)} \\
    -      & GRAD   & \shortstack{\underline{2963}\\(690)}  & \shortstack{0.241\\(0.051)}
                    & \shortstack{\underline{2659}\\(1176)} & \shortstack{0.541\\(0.188)} \\
    ASAP   & GRAD   & \shortstack{2691\\(1004)} & \shortstack{\textbf{0.179}\\(0.034)}
                    & \shortstack{\textbf{3128}\\(1045)} & \shortstack{\textbf{0.345}\\(0.094)} \\
    \bottomrule
    \end{tabularx}
    }
    \caption{Measurements of \emph{re} and \emph{sm} when the spatial term is fixed to ASAP and combined with various temporal terms.}
    \label{tab:ablation_temporal}
\end{table}
\subsubsection{Compatibility with Temporal Methods.} We conducted experiments under five combination settings, as shown in Table~\ref{tab:ablation_temporal}, to evaluate whether our proposed spatial term is compatible with various temporal terms.  
When integrated with the CAPS temporal regularization, \emph{sm} exhibited substantial improvements of 36.6\% in Hopper and 55.3\% in Walker, accompanied by consistent gains in \emph{re} across both environments.
 Similarly, incorporating the Grad-CAPS temporal term led to \emph{sm} improvements of 25.7\% in Hopper and 36.2\% in Walker, while \emph{re} also improved, except for a marginal decline in Hopper.
These results showed that when our spatial term is combined with a temporal term, it reduces oscillations without significant interference.

\section{Conclusion and Limitation}
\subsubsection{Conclusion.}
In this paper, we introduced ASAP, a novel action-smoothing method for reinforcement learning. 
ASAP mitigates action oscillations by leveraging the transition function's Lipschitz continuity and penalizing second-order action differences.
Through extensive experiments, we demonstrated that our method effectively maintains policy performance while improving action smoothness.
\subsubsection{Limitation.}
While ASAP defines similar states based on observation noise to better reflect real transition dynamics, the presence of high noise levels may enlarge the resulting neighborhood, potentially diminishing the effectiveness of the local Lipschitz constraint and spatial regularization.
This issue can be mitigated by adjusting the spatial loss weight $\lambda_S$.

\section{Acknowledgements}
This work was supported by the Technology Innovation Program(RS-2025-25453780, Development of a National Humanoid AI Robot Foundation Model for Multi‑Task Applications) funded by the Ministry of Trade Industry \& Energy(MOTIE, Korea), and in part by the Institute of Information and Communications Technology Planning and Evaluation (IITP) grant funded by the Korean government (MSIT) under Grant RS-2022-00155911, Artificial Intelligence Convergence Innovation Human Resources Development (Kyung Hee University), and in part by Convergence security core talent training business support program under Grant
IITP-2023-RS-2023-00266615).


\bibliography{aaai2026}

\end{document}